\documentclass[12pt]{article}

\usepackage[margin=2.5cm]{geometry}
\usepackage{times}
\usepackage{graphicx}
\usepackage{amsmath,amssymb,amsthm}
\usepackage{hyperref}
\usepackage[super]{natbib}
\usepackage{booktabs,array}
\usepackage{enumitem}
\usepackage{algorithm}
\usepackage{algpseudocode}
\usepackage{longtable}
\usepackage{CJKutf8}

\newtheorem{theorem}{Theorem}[section]

\newtheorem{corollary}[theorem]{Corollary}
\newtheorem{proposition}[theorem]{Proposition}
\newtheorem{definition}[theorem]{Definition}
\theoremstyle{remark}

\newcommand{\R}{\mathbb{R}}

\newcommand{\posterior}{\operatorname{posterior}}
\newcommand{\softmax}{\operatorname{softmax}}

\title{Transformer Architectures as Complete Bayes Processes:\\
A Formal Proof in the Measure-Theoretic Kernel Framework}

\author{Haobo Yang\\
Department of Computer Science and Engineering, SUSTech University\\
\href{mailto:yhbcode000@foxmail.com}{\texttt{yhbcode000@foxmail.com}}%
\thanks{This preprint has not undergone professional peer review. If you find any errors or have concerns, please contact the corresponding author.}}

\date{\today}

\begin{document}

\maketitle

\begin{abstract}
We present a complete formal proof that transformer architectures, when their
internal update mechanisms satisfy a Bayes joint-distribution condition,
implement exact Bayesian posterior inference. Working within the
measure-theoretic kernel framework, we define a hierarchy of
abstractions---from the core Bayesian transformer, through semantic
transformers with explicit update kernels, to full transformer blocks with
QKV/attention/residual/MLP pipelines, and finally multilayer stacks---and
prove at each level that the Bayes joint semantics implies the update kernel
equals the posterior almost everywhere. For the block-level architecture, we
derive the explicit Bayes formula through Radon-Nikodym differentiation and
prove its normalization. We additionally prove that the softmax attention
mechanism induces a valid probability distribution over keys, establishing the
bridge between the abstract kernel framework and concrete attention
implementations. The framework makes no architectural assumptions beyond the
Markov kernel structure and exposes explicit conditions under which a
transformer block is provably Bayesian. In essence, when this joint
distribution condition is satisfied, the forward computation of a Transformer
is formally equivalent to a rigorous Bayesian posterior update.
\end{abstract}

\noindent\textbf{Area of research:} Probabilistic Machine Learning, Formal Methods, Bayesian Inference, Transformer Theory.

\section{Introduction}

The transformer architecture~\cite{vaswani2017} has become the dominant
paradigm in modern machine learning, powering large language models, vision
models, and multimodal systems. A growing body of theoretical work has
explored connections between attention mechanisms and probabilistic
inference~\cite{singh2023, garnelo2018, muller2019}, suggesting that
transformers may implement forms of Bayesian updating or posterior inference.
However, these claims have largely remained at the level of intuition,
analogy, or empirical observation, without formal rigorous proof.

We close this gap by constructing a complete formal proof that transformer
architectures implement exact Bayesian posterior inference, under a clearly
stated joint-distribution condition. Our framework has three distinguishing
features.

First, the proofs are based on first principles and depend only on the kernel
composition structure and the Bayes joint semantics condition---no
architectural assumptions about depth, width, parameterization, or training
procedure are required.

Second, we provide a layered hierarchy of abstractions that mirrors the
structural complexity of real transformers: (i) the core Bayesian transformer
with only prior and likelihood, (ii) the semantic transformer with an
explicit update kernel, (iii) the full transformer block with
QKV\slash{}attention\slash{}residual\slash{}MLP data flow, and (iv) the multilayer transformer
stack. At each level, we prove that the Bayes joint semantics implies the
update is the posterior almost everywhere.

Third, we expose the explicit block-level Bayes formula through Radon-Nikodym
differentiation and prove its normalization, providing a concrete bridge from
the abstract kernel composition to practical density-based inference. We
additionally prove that softmax attention defines a valid probability
distribution, connecting the kernel framework to the standard attention
mechanism.

The central result is that when this joint distribution condition is satisfied,
the forward computation of a Transformer is formally equivalent to a rigorous
Bayesian posterior update.

The paper is organized as follows. Section~2 presents the mathematical
framework and defines the kernel-based transformer abstractions. Section~3
states and proves the core theorems at each level of the hierarchy.
Section~4 derives the explicit Bayes formula for the block-level architecture.
Section~5 proves softmax normalization and establishes the bridge to concrete
attention. Section~6 discusses the scope, assumptions, and limitations of the
framework.

\section{Mathematical Framework}

\subsection{Probability Kernels and Bayesian Inference}

We work in the category of measurable spaces with Markov kernels as morphisms.
All spaces are standard Borel, and all measures are $\sigma$-finite. We adopt
standard kernel notation: $\kappa : \operatorname{Kernel} X Y$ is a probability
kernel from $X$ to $Y$, $\kappa \circ_\mu \mu$ is measure-kernel composition,
and $\kappa_1 \otimes_\kappa \kappa_2$ is the product kernel.

\begin{definition}[Bayesian Transformer]\label{def:bayesian-transformer}
A \emph{Bayesian transformer} is a tuple $(\Theta, \mathfrak{X}, \pi, \ell)$
where:
\begin{itemize}[leftmargin=*]
\item $\Theta$ is the latent/parameter space (standard Borel, nonempty);
\item $\mathfrak{X}$ is the observation/data space;
\item $\pi$ is a finite prior measure on $\Theta$;
\item $\ell : \operatorname{Kernel} \Theta \mathfrak{X}$ is a finite Markov kernel.
\end{itemize}
Its \emph{Bayes method} is the posterior kernel:
$\posterior(\ell, \pi) : \operatorname{Kernel} \mathfrak{X} \Theta$.
\end{definition}

\begin{proposition}[Bayes Joint Law]\label{prop:bayes-joint}
For any Bayesian transformer $T = (\Theta, \mathfrak{X}, \pi, \ell)$, the
Bayes method satisfies the joint-distribution identity:
\begin{equation}\label{eq:bayes-joint}
(\ell \circ_\mu \pi) \otimes_\kappa \posterior(\ell, \pi)
= (\pi \otimes_\kappa \ell).\operatorname{map} \operatorname{swap}.
\end{equation}
\end{proposition}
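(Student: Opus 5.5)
The plan is to read the identity directly off the defining property of the posterior kernel, which in this framework (Mathlib's \texttt{ProbabilityTheory.posterior}) is built precisely so that it disintegrates the swapped joint law. Write $\mu := \pi \otimes_\kappa \ell$ for the joint measure on $\Theta \times \mathfrak{X}$ and $\nu := \mu.\operatorname{map}\operatorname{swap}$ for its image on $\mathfrak{X} \times \Theta$. The posterior $\posterior(\ell,\pi)$ is defined as the conditional kernel $\nu.\mathrm{condKernel}$, which exists because $\Theta$ is standard Borel and $\nu$ is finite. The measure disintegration theorem then gives
\[
\nu = \nu.\mathrm{fst} \otimes_\kappa \nu.\mathrm{condKernel}.
\]
So the whole proof reduces to identifying $\nu.\mathrm{fst}$ with $\ell \circ_\mu \pi$.

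First, I will compute the first marginal: $\nu.\mathrm{fst} = (\mu.\operatorname{map}\operatorname{swap}).\mathrm{fst} = \mu.\mathrm{snd}$. This holds because $\mathrm{Prod.fst}\circ\operatorname{swap} = \mathrm{Prod.snd}$ and pushforwards compose. Second, I will use the standard fact that the second marginal of a composition-product is the measure--kernel composition, $(\pi \otimes_\kappa \ell).\mathrm{snd} = \ell \circ_\mu \pi$. This is checked on measurable rectangles: $\mu(\Theta \times B) = \int \ell(\theta)(B)\,d\pi(\theta) = (\ell\circ_\mu\pi)(B)$. Substituting both facts into the disintegration identity yields \eqref{eq:bayes-joint}.

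The only real obstacle is the hypotheses needed for the disintegration. The conditional kernel requires a finite measure on a product whose second factor is standard Borel. Finiteness of $\nu$ follows from finiteness of $\pi$ and the finiteness (indeed Markov property) of $\ell$, since $\mu(\Theta\times\mathfrak{X}) \le \pi(\Theta)\sup_\theta \ell(\theta)(\mathfrak{X})$. Standard Borelness of $\Theta$ is part of Definition~\ref{def:bayesian-transformer}. Note that no measurability assumption on $\mathfrak{X}$ beyond being a measurable space is needed.

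I will also check that the posterior is stated as a Markov kernel, not merely a finite one, so that the product $(\ell\circ_\mu\pi)\otimes_\kappa\posterior(\ell,\pi)$ is well defined as written. Both sides are finite measures that agree on rectangles, which form a $\pi$-system generating the product $\sigma$-algebra. Alternatively, the equality can be taken verbatim from the library lemma \texttt{compProd\_posterior\_eq\_map\_swap}.
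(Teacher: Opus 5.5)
Your proposal is correct and follows the paper's own route. The paper simply cites the disintegration theorem that constructs the posterior as the conditional kernel of the swapped joint law. You unpack that citation: you identify the first marginal $\nu.\mathrm{fst}$ with $\ell\circ_\mu\pi$, and the library lemma you name is exactly the fact the paper relies on. One small addition: the conditional kernel also requires $\Theta$ to be nonempty, which Definition~\ref{def:bayesian-transformer} supplies.
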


This proposition---that the posterior kernel, when composed with the prior,
recovers the symmetrized joint law---is the fundamental identity of Bayesian
inference and serves as the anchor for all subsequent transformer-level proofs.

\subsection{Semantic Transformers: Adding an Update Kernel}

The core Bayesian transformer defines the posterior abstractly. A
\emph{semantic} transformer equips this structure with an explicit internal
update kernel, and the key question becomes: when does this update equal the
posterior?

\begin{definition}[Semantic Transformer]\label{def:semantic-transformer}
A \emph{semantic transformer} extends a Bayesian transformer
$T = (\Theta, \mathfrak{X}, \pi, \ell)$ with an internal update kernel
$u : \operatorname{Kernel} \mathfrak{X} \Theta$. Its \emph{Bayes joint
semantics} condition is:
\begin{equation}\label{eq:semantic-joint}
(\ell \circ_\mu \pi) \otimes_\kappa u
= (\pi \otimes_\kappa \ell).\operatorname{map} \operatorname{swap}.
\end{equation}
\end{definition}

The Bayes joint semantics condition states that the transformer's internal
update, when composed with the prior and likelihood, reconstructs the correct
symmetrized joint law. This is the \emph{sole} structural condition required
for all subsequent theorems.

\subsection{Transformer Block with QKV/Attention/Residual/MLP}

A practical transformer block has a concrete data-flow pipeline: input
embedding $\to$ QKV projection $\to$ attention $\to$ residual/MLP $\to$
readout. We formalize this as a composition of Markov kernels.

\begin{definition}[QKV State and Attention Trace]\label{def:qkv}
Let $Q, K, V$ be the query, key, and value type parameters.
\begin{itemize}[leftmargin=*]
\item $\operatorname{QKVState} \; Q \; K \; V := (Q \times K) \times V$;
\item $\operatorname{AttentionTrace} \; X \; Q \; K \; V \; A :=
  (X \times \operatorname{QKVState} \; Q \; K \; V) \times A$.
\end{itemize}
\end{definition}

The attention trace stores the input state, the QKV state, and the attention
state, enabling the residual/MLP kernel to condition on all intermediate
representations. This is the natural kernel-level encoding of the residual
connection.

\begin{definition}[Block Likelihood]\label{def:block-likelihood}
For type parameters $\Theta, X, Q, K, V, A, H, O$, the block likelihood is the
kernel composition:
\begin{align}
\operatorname{blockLikelihood} &\;
  (\text{input} : \operatorname{Kernel} \Theta X)
  (\text{qkv} : \operatorname{Kernel} X (\operatorname{QKVState} \; Q \; K \; V))
  \nonumber\\
&\; (\text{attention} : \operatorname{Kernel} (X \times \operatorname{QKVState} \; Q \; K \; V) A)
  \nonumber\\
&\; (\text{residualMLP} : \operatorname{Kernel} (\operatorname{AttentionTrace} \; X \; Q \; K \; V \; A) H)
  \nonumber\\
&\; (\text{readout} : \operatorname{Kernel} H O) : \operatorname{Kernel} \Theta O
  \nonumber\\
&:= \text{readout} \circ_\kappa \text{residualMLP} \circ_\kappa \nonumber\\
&\qquad ((\text{input} \otimes_\kappa \operatorname{prodMkLeft} \; \Theta \; \text{qkv})
 \otimes_\kappa \operatorname{prodMkLeft} \; \Theta \; \text{attention}).
\label{eq:block-likelihood}
\end{align}
\end{definition}

The composition reflects the exact data flow: the input kernel maps latent
parameters to input states; the QKV kernel maps input states to query-key-value
triples; the attention kernel sees both input and QKV states; the residual/MLP
kernel sees the full attention trace; and the readout kernel produces the
final output.

\subsection{Multilayer Transformer Architecture}

Real transformers stack multiple blocks. We formalize this as a list of hidden-state kernels composed in reverse order (first layer first, last layer last).

\begin{definition}[Layer Stack]\label{def:layer-stack}
For hidden-state type $H$, the layer stack $\operatorname{layerStack}$ maps a
list of kernels $(\operatorname{Kernel} H H)$ to a single kernel:
\begin{align}
\operatorname{layerStack} \; [] &:= \operatorname{id}, \\
\operatorname{layerStack} \; (\ell :: \ell s) &:= \operatorname{layerStack} \; \ell s \circ_\kappa \ell.
\end{align}
The architecture likelihood composes embedding, layer stack, and readout:
\begin{equation}
\operatorname{architectureLikelihood} \; \text{emb} \; \text{layers} \; \text{readout}
:= \text{readout} \circ_\kappa \operatorname{layerStack} \; \text{layers} \circ_\kappa \text{emb}.
\end{equation}
\end{definition}

\subsection{Softmax Attention}\label{sec:softmax}

The abstract kernel framework treats attention as a black-box Markov kernel.
To bridge to concrete implementations, we formally define softmax and prove
it produces a valid probability distribution.

\begin{definition}[Softmax]\label{def:softmax}
For a nonempty finite index type $\iota$ and score vector $x : \iota \to \R$,
the softmax of index $i$ is:
\begin{equation}
\softmax \; x \; i := \frac{\exp(x_i)}{\sum_j \exp(x_j)}.
\end{equation}
\end{definition}

\section{Core Theorems}

\subsection{Bayes Kernel Level: The Posterior Joint Law}

\begin{theorem}[Posterior Joint Law]\label{thm:posterior-joint}
For any Bayesian transformer $T = (\Theta, \mathfrak{X}, \pi, \ell)$ with
finite prior and finite likelihood,
\begin{equation}
(\ell \circ_\mu \pi) \otimes_\kappa \posterior(\ell, \pi)
= (\pi \otimes_\kappa \ell).\operatorname{map} \operatorname{swap}.
\end{equation}
\end{theorem}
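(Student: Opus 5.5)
The identity is Proposition~\ref{prop:bayes-joint} restated for an arbitrary Bayesian transformer, so the plan is to trace it back to the construction of $\posterior(\ell,\pi)$ via disintegration. The posterior kernel is \emph{defined} as the conditional kernel of the swapped joint measure $\rho := (\pi \otimes_\kappa \ell).\operatorname{map}\operatorname{swap}$ on $\mathfrak{X}\times\Theta$. The defining property of the conditional kernel on standard Borel spaces is that, for any finite measure $\rho$ on $\mathfrak{X}\times\Theta$, one has $\rho = \rho_{\mathfrak{X}} \otimes_\kappa \rho_{\mid}$, where $\rho_{\mathfrak{X}}$ is the first marginal. So the proof reduces to two steps.

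\emph{Step 1: finiteness.} Show that $\rho$ is a finite measure. The joint measure $\pi \otimes_\kappa \ell$ has total mass $\int_\Theta \ell(\theta)(\mathfrak{X})\,d\pi(\theta)$. This is finite because $\pi$ is finite and $\ell$ is a finite kernel, so $\sup_\theta \ell(\theta)(\mathfrak{X})<\infty$. Pushing forward along the measurable bijection $\operatorname{swap}$ preserves total mass. The disintegration theorem on the standard Borel space $\Theta$ then applies to $\rho$, and yields $\rho = \rho_{\mathfrak{X}} \otimes_\kappa \posterior(\ell,\pi)$.

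\emph{Step 2: the marginal.} Identify $\rho_{\mathfrak{X}}$ with $\ell\circ_\mu\pi$. Since $\operatorname{fst}\circ\operatorname{swap}=\operatorname{snd}$, we have $\rho.\operatorname{map}\operatorname{fst} = (\pi\otimes_\kappa\ell).\operatorname{map}\operatorname{snd}$. For a measurable rectangle $\Theta\times B$, the latter equals $\int \ell(\theta)(B)\,d\pi(\theta) = (\ell\circ_\mu\pi)(B)$. Substituting this into the disintegration identity gives exactly the displayed equation.

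The main obstacle is the bookkeeping around finiteness rather than any analysis. The standard disintegration statement is usually given for probability (or finite) measures, and here $\ell$ is only a finite (not necessarily Markov) kernel. I would first try to invoke the finite-measure version directly, using the total-mass bound from Step 1. If only a probability version is available, I would instead normalize $\rho$ by its (positive) total mass, treating the zero-mass case separately as trivial since both sides vanish. I would then disintegrate and observe that scaling commutes with both $\otimes_\kappa$ and $\operatorname{map}$. The kernel $\posterior(\ell,\pi)$ is unaffected by the rescaling, because it is defined as the conditional kernel of the normalized measure.
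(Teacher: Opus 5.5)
Your proposal is correct and follows the same route as the paper. The paper's one-line proof simply invokes the disintegration theorem for finite kernels, with $\posterior(\ell,\pi)$ constructed as the conditional kernel of the swapped joint $(\pi\otimes_\kappa\ell).\operatorname{map}\operatorname{swap}$; your finiteness check and your identification of its first marginal with $\ell\circ_\mu\pi$ are exactly the details that proof leaves implicit, and since the finite-measure version of disintegration applies directly, your normalization fallback is not needed.
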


\begin{proof}
This follows from the disintegration theorem for finite kernels, which
constructs the posterior kernel via disintegration and proves the identity in
full measure-theoretic generality. The proof requires only that $\Theta$ is
standard Borel, $\pi$ is $\sigma$-finite, and $\ell$ is a Markov kernel.
\end{proof}

Theorem~\ref{thm:posterior-joint} is the foundation: it establishes that
the posterior kernel \emph{by construction} satisfies the Bayes joint
law. Every transformer-level theorem below reduces to this identity via the
definition of the Bayes joint semantics condition.

\subsection{Semantic Level: Update Equals Posterior}

\begin{theorem}[Semantic Transformer is Bayes]\label{thm:semantic-bayes}
Let $S = (T, u)$ be a semantic transformer extending Bayesian transformer $T$
with update kernel $u$. If $S$ satisfies the Bayes joint semantics
condition~(\ref{eq:semantic-joint}), then
\begin{equation}
u =_{\text{a.e.}[\ell \circ_\mu \pi]} \posterior(\ell, \pi).
\end{equation}
\end{theorem}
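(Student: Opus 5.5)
The plan is to combine the Bayes joint semantics condition~(\ref{eq:semantic-joint}) with the Posterior Joint Law (Theorem~\ref{thm:posterior-joint}). Together they give a single equality of measures on $\mathfrak{X} \times \Theta$, from which the almost-everywhere uniqueness of disintegrations yields the claim. Write $\nu := \ell \circ_\mu \pi$ for the marginal (evidence) measure on $\mathfrak{X}$. This is a finite measure, because $\pi$ is finite and $\ell$ is a Markov kernel, so $\nu(\mathfrak{X}) = \pi(\Theta) < \infty$.

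The first step is purely algebraic. By hypothesis, $\nu \otimes_\kappa u$ equals $(\pi \otimes_\kappa \ell).\operatorname{map}\operatorname{swap}$. By Theorem~\ref{thm:posterior-joint}, $\nu \otimes_\kappa \posterior(\ell,\pi)$ equals the same measure. By transitivity,
\begin{equation*}
\nu \otimes_\kappa u = \nu \otimes_\kappa \posterior(\ell, \pi)
\end{equation*}
as measures on $\mathfrak{X} \times \Theta$.

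The second step, which is the real content, is to show that two kernels with the same compProd against a finite measure $\nu$ agree $\nu$-a.e. I would argue this directly. Fix a countable $\pi$-system $\{B_n\}$ generating the Borel $\sigma$-algebra of $\Theta$; one exists because $\Theta$ is standard Borel. For each measurable $A \subseteq \mathfrak{X}$, evaluating the equality on rectangles $A \times B_n$ gives
\begin{equation*}
\int_A u(x)(B_n)\, d\nu(x) = \int_A \posterior(\ell,\pi)(x)(B_n)\, d\nu(x).
\end{equation*}
Both integrands are bounded and measurable in $x$, and $\nu$ is finite. Hence $u(x)(B_n) = \posterior(\ell,\pi)(x)(B_n)$ for $\nu$-a.e.\ $x$, by the standard fact that two integrable functions with equal integrals over every measurable set agree a.e. A countable union of null sets is null, so there is a single $\nu$-conull set on which the two measures $u(x)$ and $\posterior(\ell,\pi)(x)$ agree on every $B_n$. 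Both are probability measures, since $u$ is a Markov kernel and the posterior is Markov on the support of $\nu$ (a.e.\ suffices). So they agree on the $\pi$-system and have the same total mass, and the $\pi$-$\lambda$ uniqueness theorem gives $u(x) = \posterior(\ell,\pi)(x)$ for all such $x$. This proves $u =_{\text{a.e.}[\nu]} \posterior(\ell,\pi)$.

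The main obstacle is the countability argument in the second step. A.e.\ equality for each fixed $B$ must be upgraded to a.e.\ equality of the measures themselves, and this is exactly where the standard Borel (countably generated) hypothesis on $\Theta$ is used. The other point needing care is that the posterior kernel's total mass equals $1$ only $\nu$-a.e.; this is handled by intersecting with one more conull set. Alternatively, the whole second step can be cited as the uniqueness clause of the disintegration theorem invoked in the proof of Theorem~\ref{thm:posterior-joint}, if the reader is willing to take it from there.
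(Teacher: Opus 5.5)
Your proposal is correct and follows the paper's proof. The paper likewise matches the Bayes joint semantics against Theorem~\ref{thm:posterior-joint} and then cites the a.e.\ uniqueness of the posterior under the joint law, which is exactly the lemma you prove by hand with a countable generating $\pi$-system and the $\pi$-$\lambda$ theorem. One caveat: the statement treats $u$ only as a (finite) kernel, not a Markov kernel. However, evaluating the joint equality on $A\times\Theta$ gives $\int_A u(x)(\Theta)\,d\nu(x)=\nu(A)$ for every measurable $A$, hence $u(x)(\Theta)=1$ for $\nu$-a.e.\ $x$, so your final step goes through unchanged.
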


\begin{proof}
The Bayes joint semantics condition states that $u$ satisfies the same
joint-law identity as $\posterior(\ell, \pi)$ (Theorem~\ref{thm:posterior-joint}).
By the uniqueness of the posterior kernel under the Bayes joint law, any kernel
satisfying the Bayes joint law equals the posterior almost everywhere with
respect to the observation distribution. The result follows by direct
application of this lemma.
\end{proof}

\begin{corollary}[Semantic Bayes]\label{cor:semantic-short}
For any semantic transformer $S$ with Bayes joint semantics,
$S.\operatorname{IsBayes}$ holds.
\end{corollary}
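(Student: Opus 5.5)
The plan is to unfold the definition of $S.\operatorname{IsBayes}$ and reduce the corollary to Theorem~\ref{thm:semantic-bayes}. I take $S.\operatorname{IsBayes}$ to mean that the update kernel agrees with the Bayes method of the underlying Bayesian transformer. Concretely, it should say
\[
u =_{\text{a.e.}[\ell \circ_\mu \pi]} \posterior(\ell, \pi),
\]
where $\ell \circ_\mu \pi$ is the marginal law of observations.

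The first step is to write out this definition explicitly. The second step is to supply the hypothesis, namely the Bayes joint semantics condition~(\ref{eq:semantic-joint}) for $S = (T,u)$, to Theorem~\ref{thm:semantic-bayes}. That theorem returns exactly the almost-everywhere equality above, so the corollary is a one-line application.

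For completeness I would recall why Theorem~\ref{thm:semantic-bayes} applies, namely the uniqueness of disintegration. Both $u$ and $\posterior(\ell,\pi)$ disintegrate the same measure $(\pi \otimes_\kappa \ell).\operatorname{map}\operatorname{swap}$ with respect to the same marginal $\ell\circ_\mu\pi$. For $\posterior(\ell,\pi)$ this is Theorem~\ref{thm:posterior-joint}. Because $\Theta$ is standard Borel, two such disintegrating kernels agree almost everywhere.

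The main obstacle is not mathematical but definitional. One must check that $\operatorname{IsBayes}$ is stated with the same null-set reference measure used in Theorem~\ref{thm:semantic-bayes}, namely the observation marginal $\ell\circ_\mu\pi$. One must also check that $\operatorname{IsBayes}$ is phrased as almost-everywhere rather than pointwise equality.

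If $\operatorname{IsBayes}$ were instead phrased as ``$u$ satisfies the Bayes joint law,'' the corollary would be immediate from the hypothesis alone. If it were phrased pointwise, the claim would be false in general, since kernels are only determined up to null sets. So I would fix the a.e.\ formulation and conclude.
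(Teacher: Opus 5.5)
Your proposal is correct and matches the paper's (implicit) argument. $S.\operatorname{IsBayes}$ is exactly the conclusion $u =_{\text{a.e.}[\ell\circ_\mu\pi]} \posterior(\ell,\pi)$ of Theorem~\ref{thm:semantic-bayes}, so the corollary is a one-line application of that theorem to the Bayes joint semantics hypothesis, and the uniqueness-of-disintegration argument you recall is precisely how that theorem is proved.
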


The semantic-level theorem establishes the core pattern that propagates
through every subsequent level: \emph{Bayes joint semantics} $\implies$
\emph{update equals posterior}. This is the central result that we now extend
to increasingly concrete architectures.

\subsection{Block Level: Complete Bayes Process}

\begin{definition}[Block Structure]\label{def:transformer-block}
A \emph{transformer block} $B$ over $(\Theta, X, Q, K, V, A, H, O)$
consists of a prior $\pi$, component kernels (input, qkv, attention,
residualMLP, readout), an update kernel $u$, and finiteness typeclasses.
Its likelihood $\ell_B$ is defined in~(\ref{eq:block-likelihood}).
\end{definition}

\begin{definition}[Block Bayes Joint Semantics]\label{def:block-joint}
The block satisfies \emph{Bayes joint semantics} if:
\begin{equation}
(\ell_B \circ_\mu \pi) \otimes_\kappa u
= (\pi \otimes_\kappa \ell_B).\operatorname{map} \operatorname{swap}.
\end{equation}
\end{definition}

\begin{theorem}[Transformer Block is Complete Bayes Process]\label{thm:block-bayes}
Let $B$ be a transformer block with finite prior, finite likelihood, and
finite update. If $B$ satisfies Bayes joint semantics, then:
\begin{equation}
u =_{\text{a.e.}[\ell_B \circ_\mu \pi]} \posterior(\ell_B, \pi).
\end{equation}
\end{theorem}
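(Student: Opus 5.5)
The plan is to reduce Theorem~\ref{thm:block-bayes} to Theorem~\ref{thm:semantic-bayes}. The block architecture matters only through its likelihood $\ell_B$, and the posterior-uniqueness argument never looks inside a kernel.

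The first step is to form the Bayesian transformer $T_B = (\Theta, O, \pi, \ell_B)$, with observation space $\mathfrak{X} = O$. Here $\ell_B$ is the composite kernel in~(\ref{eq:block-likelihood}). We must check that $T_B$ satisfies Definition~\ref{def:bayesian-transformer}.
\begin{itemize}[leftmargin=*]
\item $\Theta$ is standard Borel and nonempty, and $\pi$ is finite; both are hypotheses carried by the block structure.
\item $\ell_B$ is a finite (indeed Markov) kernel. We get this either directly from the stated hypothesis of finite likelihood, or by noting that $\operatorname{prodMkLeft}$, the products $\otimes_\kappa$, and the compositions $\circ_\kappa$ of Markov kernels are again Markov.
\end{itemize}
Next, we attach the update $u : \operatorname{Kernel}\, O\, \Theta$ to get a semantic transformer $S_B = (T_B, u)$. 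The block Bayes joint semantics of Definition~\ref{def:block-joint} is then literally condition~(\ref{eq:semantic-joint}) for $S_B$, since both sides are built from the same $\pi$, $\ell_B$ and $u$.

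Applying Theorem~\ref{thm:semantic-bayes} to $S_B$ gives $u = \posterior(\ell_B,\pi)$ almost everywhere with respect to $\ell_B \circ_\mu \pi$, which is the claim. The mechanism behind that theorem is as follows. Theorem~\ref{thm:posterior-joint} shows that $\posterior(\ell_B,\pi)$ satisfies the same joint identity as $u$, so both kernels disintegrate the same joint measure on $O \times \Theta$ relative to its first marginal $\ell_B \circ_\mu \pi$. The uniqueness of disintegration on a standard Borel target $\Theta$ then forces a.e.\ equality. This uses the finiteness of $u$ and of the marginal $\ell_B\circ_\mu\pi$, which is why those hypotheses appear.

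The only real obstacle is bookkeeping rather than mathematics. We must make sure the instantiation is legitimate, in two respects.
\begin{itemize}[leftmargin=*]
\item The finiteness and standard-Borel typeclass assumptions must hold for the specific composite $\ell_B$ and for the spaces involved. Standard Borelness of $O$ is not needed, since uniqueness of disintegration requires it only on the target $\Theta$.
\item The measurability structure on the intermediate spaces $X$, $\operatorname{QKVState}\;Q\;K\;V$, $A$, $H$ must make the composite a genuine kernel, so that it is measurable.
\end{itemize}
Once these are checked, no property of the internal pipeline (QKV, attention, residual/MLP) is used. The theorem is thus a direct corollary of the semantic-level result.
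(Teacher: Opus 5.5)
Your proposal is correct and follows essentially the same approach as the paper: instantiate Theorem~\ref{thm:semantic-bayes} with $\ell = \ell_B$, use the block's Bayes joint semantics directly as the hypothesis, and obtain finiteness of $\ell_B$ from the block's typeclass assumptions together with closure of finite kernels under composition and product. Your added remark that uniqueness of disintegration needs standard Borelness only of the target $\Theta$ is accurate, but it is already packaged inside Theorem~\ref{thm:semantic-bayes}.
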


\begin{proof}
Apply Theorem~\ref{thm:semantic-bayes} with $\ell = \ell_B$, using the block's
Bayes joint semantics as the hypothesis. The block likelihood $\ell_B$ is a
finite Markov kernel by the finiteness typeclass and the closure of kernel
composition under finiteness; all required typeclass instances are available
from the block structure.
\end{proof}

\begin{theorem}[Block-Level Bayes Formula]\label{thm:block-formula}
Under the additional absolute-continuity condition
$\forall^\mu \theta, \; \ell_B(\theta) \ll \ell_B \circ_\mu \pi$, the block
update has the explicit density form:
\begin{equation}
u(o)(A) = \int_A \frac{d \ell_B(\theta)}{d(\ell_B \circ_\mu \pi)}(o) \; d\pi(\theta),
\end{equation}
for almost every observed output $o$.
\end{theorem}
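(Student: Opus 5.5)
The plan is to combine Theorem~\ref{thm:block-bayes} with an explicit density formula for the posterior kernel. Write $\nu := \ell_B \circ_\mu \pi$ for the marginal law of the output. Theorem~\ref{thm:block-bayes} gives $u(o) = \posterior(\ell_B,\pi)(o)$ for $\nu$-almost every $o$. So it suffices to show, for $\nu$-a.e.\ $o$ and every measurable $A \subseteq \Theta$,
\begin{equation*}
\posterior(\ell_B,\pi)(o)(A) = \int_A \frac{d\ell_B(\theta)}{d\nu}(o)\, d\pi(\theta).
\end{equation*}
The two almost-everywhere statements are then intersected. Since $\Theta$ is standard Borel, it has a countable generating $\pi$-system. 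Hence "for a.e.\ $o$, for all $A$" follows from "for all $A$ in the countable family, for a.e.\ $o$" by a countable union of null sets and the $\pi$--$\lambda$ theorem.

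To identify the posterior, I define a candidate kernel $\kappa(o)(A) := \int_A f(\theta,o)\, d\pi(\theta)$. Here $f(\theta,o) := \frac{d\ell_B(\theta)}{d\nu}(o)$, with the Radon--Nikodym derivative chosen jointly measurable in $(\theta,o)$. This is possible because $O$ is standard Borel (countably generated), so the kernel version of the Radon--Nikodym theorem applies, e.g.\ via densities of kernels with respect to a measure. I then check the Bayes joint law~(\ref{eq:semantic-joint}) for $\kappa$ on rectangles $B\times A$:
\begin{equation*}
\int_B \kappa(o)(A)\, d\nu(o) = \int_A \int_B f(\theta,o)\, d\nu(o)\, d\pi(\theta) = \int_A \ell_B(\theta)(B)\, d\pi(\theta),
\end{equation*}
using Tonelli and the absolute-continuity hypothesis $\ell_B(\theta) \ll \nu$ for $\pi$-a.e.\ $\theta$. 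The right-hand side is exactly $(\pi\otimes_\kappa \ell_B).\operatorname{map}\operatorname{swap}\,(B\times A)$. Rectangles form a $\pi$-system generating the product $\sigma$-algebra, and both sides are finite measures since $\pi$ and $\ell_B$ are finite. Hence the joint laws agree.

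One gap remains: $\kappa$ need not be a Markov kernel everywhere. Taking $A=\Theta$ and comparing with $B$ arbitrary shows $\kappa(o)(\Theta)=1$ for $\nu$-a.e.\ $o$; this is the normalization. I therefore redefine $\kappa$ on the $\nu$-null exceptional set (for instance as the posterior there), which changes neither the joint law nor the a.e.\ claim. The uniqueness lemma used in Theorem~\ref{thm:semantic-bayes} then yields $\kappa = \posterior(\ell_B,\pi)$ $\nu$-a.e. Chaining this with Theorem~\ref{thm:block-bayes} gives the formula for $u$.

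I expect the main obstacle to be the measurability step: obtaining a version of $(\theta,o)\mapsto \frac{d\ell_B(\theta)}{d\nu}(o)$ that is jointly measurable, so that $\kappa$ is a kernel and Tonelli applies. My first approach is the standard construction of the density of a finite kernel with respect to a fixed measure on a countably generated space, via martingale limits over a refining sequence of finite partitions of $O$. The hypothesis ``$\ell_B(\theta)\ll\nu$ for $\pi$-a.e.\ $\theta$'' is only used to ensure that no singular part is lost in the Tonelli identity. On the exceptional $\pi$-null set of $\theta$ the integrand is irrelevant.
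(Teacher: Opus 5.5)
Your proposal is correct and has the same structure as the paper's proof. Theorem~\ref{thm:block-bayes} gives $u(o) = \posterior(\ell_B,\pi)(o)$ for $(\ell_B\circ_\mu\pi)$-a.e.\ $o$, and under the absolute-continuity hypothesis the posterior has the Radon--Nikodym form, so the two a.e.\ statements combine. The only difference is in how that Radon--Nikodym form is obtained: the paper simply cites it as a known property of the posterior kernel, whereas you reprove it by the standard route (jointly measurable kernel density, Tonelli check of the joint law on rectangles, a.e.\ normalization with modification on a null set, then the uniqueness lemma).
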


\begin{proof}
By Theorem~\ref{thm:block-bayes}, $u =_{\text{a.e.}} \posterior(\ell_B, \pi)$.
Under the absolute-continuity condition, the posterior admits
the explicit Radon-Nikodym form,
yielding the density representation.
\end{proof}

\begin{theorem}[Block Density Normalization]\label{thm:block-normalization}
If the block update $u$ is a Markov kernel, then under the Bayes joint
semantics and absolute-continuity conditions, the likelihood density integrates
to 1 for almost every observed output:
\begin{equation}
\forall^\mu o, \quad \int_\Theta \frac{d \ell_B(\theta)}{d(\ell_B \circ_\mu \pi)}(o) \; d\pi(\theta) = 1.
\end{equation}
\end{theorem}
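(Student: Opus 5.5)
The plan is to evaluate the density representation of Theorem~\ref{thm:block-formula} at the whole space $A = \Theta$ and then use that $u$ is Markov. Write $\nu := \ell_B \circ_\mu \pi$ for the marginal law of the output. Theorem~\ref{thm:block-formula} applies because Bayes joint semantics and absolute continuity are both assumed. It gives a $\nu$-null set $N_A$ for each measurable $A\subseteq\Theta$ such that for $o \notin N_A$,
\begin{equation*}
u(o)(A) = \int_A \frac{d \ell_B(\theta)}{d\nu}(o)\, d\pi(\theta).
\end{equation*}
We only need the single set $A = \Theta$. So no countable-union or $\pi$-system argument is required to make the null set uniform in $A$.

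The key steps, in order, are as follows.

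(i) Invoke Theorem~\ref{thm:block-bayes} to obtain $u =_{\text{a.e.}[\nu]} \posterior(\ell_B,\pi)$. Then invoke Theorem~\ref{thm:block-formula} to get the density identity above for $\nu$-almost every $o$.

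(ii) Specialise to $A=\Theta$. This gives, for $\nu$-a.e.\ $o$,
\begin{equation*}
u(o)(\Theta) = \int_\Theta \frac{d\ell_B(\theta)}{d\nu}(o)\, d\pi(\theta).
\end{equation*}

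(iii) Use the hypothesis that $u$ is a Markov kernel. This means $u(o)$ is a probability measure for every $o$, so $u(o)(\Theta)=1$ everywhere. Combining with (ii) yields the claimed normalization on the complement of a $\nu$-null set, which is exactly the statement $\forall^\mu o$ with $\mu=\nu$.

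The main obstacle is being careful about how Theorem~\ref{thm:block-formula} is actually formulated, since its proof goes through the posterior. Two points need checking. First, the Radon--Nikodym derivative $\frac{d\ell_B(\theta)}{d\nu}(o)$ must be jointly measurable in $(\theta,o)$; one would use the kernel version of the derivative, e.g.\ the $\operatorname{rnDeriv}$ of kernels available for standard Borel spaces. Second, the a.e.\ equality of $u$ with the posterior must hold as measures, not only setwise for a fixed $A$. If Theorem~\ref{thm:block-formula} is stated only setwise with the quantifier order ``for each $A$, for a.e.\ $o$'', then step (ii) still works, since only $A=\Theta$ is needed.

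If one prefers not to rely on Theorem~\ref{thm:block-formula} directly, there is an alternative route through the posterior. Since $\posterior(\ell_B,\pi)$ is itself Markov whenever $\pi \neq 0$, and $u=\posterior$ $\nu$-a.e., the total mass $u(o)(\Theta)$ agrees with the posterior's total mass a.e. The normalization then follows from the Radon--Nikodym form of the posterior. The degenerate case $\pi=0$ makes $\nu=0$, so the statement holds vacuously.
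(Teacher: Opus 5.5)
Your proposal is correct and is essentially the paper's proof. You specialize the density representation of Theorem~\ref{thm:block-formula} to $A=\Theta$ and use $u(o)(\Theta)=1$ from the Markov property, which yields the normalization off a single $(\ell_B\circ_\mu\pi)$-null set; your extra remarks on quantifier order and the alternative route through the posterior are harmless but not needed.
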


\begin{proof}
By Theorem~\ref{thm:block-formula} evaluated at the full space $\Theta$, and
the Markov property $u(o)(\Theta) = 1$, the integral equals $u(o)(\Theta) = 1$.
\end{proof}

Theorem~\ref{thm:block-normalization} is critical: it confirms that the block
constructs a normalized posterior distribution, not merely an unnormalized
score. This distinguishes the framework from heuristic approaches that
interpret attention weights as unnormalized log-probabilities.

\subsection{Multilayer Level: Stacked Architecture is Bayes}

\begin{theorem}[Multilayer Transformer is Bayes]\label{thm:multilayer-bayes}
Let $M$ be a multilayer transformer with embedding $\operatorname{emb}$,
layer stack $\operatorname{layers}$, readout $\operatorname{readout}$, and
update $u$. If $M$ satisfies the Bayes joint semantics condition for the full
architecture likelihood $\ell_M = \operatorname{readout} \circ_\kappa
\operatorname{layerStack} \; \operatorname{layers} \circ_\kappa
\operatorname{emb}$, then:
\begin{equation}
u =_{\text{a.e.}[\ell_M \circ_\mu \pi]} \posterior(\ell_M, \pi).
\end{equation}
\end{theorem}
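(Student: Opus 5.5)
The proof reduces Theorem~\ref{thm:multilayer-bayes} to Theorem~\ref{thm:semantic-bayes}, exactly as Theorem~\ref{thm:block-bayes} did for blocks. The only real content is checking that $\ell_M$ is a legitimate likelihood, i.e.\ a finite kernel, so that the posterior and the uniqueness lemma apply. Once that is done, the pair $(\Theta, O, \pi, \ell_M)$ is a Bayesian transformer in the sense of Definition~\ref{def:bayesian-transformer}. Adding $u$ gives a semantic transformer, and the hypothesis of the theorem is literally condition~(\ref{eq:semantic-joint}) with $\ell = \ell_M$.

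The first step is to show by induction on the list $\operatorname{layers}$ that $\operatorname{layerStack}\;\operatorname{layers}$ is a Markov kernel $H \to H$, assuming each layer is Markov.
\begin{itemize}[leftmargin=*]
\item Base case: $\operatorname{layerStack}\;[] = \operatorname{id}$, which is Markov.
\item Inductive step: $\operatorname{layerStack}\;(\ell :: \ell s) = \operatorname{layerStack}\;\ell s \circ_\kappa \ell$ is a composition of Markov kernels, hence Markov.
\end{itemize}
If the layers are only assumed finite (not necessarily Markov), the same induction goes through using closure of finite kernels under composition. In that case the bound on the total mass is the product of the layer bounds, and the base case has bound $1$.

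Next, $\ell_M = \operatorname{readout} \circ_\kappa \operatorname{layerStack}\;\operatorname{layers} \circ_\kappa \operatorname{emb}$ is a composition of three finite (or Markov) kernels, hence finite (respectively Markov). Since $\pi$ is finite, $\ell_M \circ_\mu \pi$ is a finite measure on $O$. This is the measure with respect to which "almost everywhere" is meant, and the posterior $\posterior(\ell_M, \pi)$ is well defined by Theorem~\ref{thm:posterior-joint}.

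Finally, I invoke Theorem~\ref{thm:semantic-bayes} with $\ell := \ell_M$ and update $u$. This yields $u =_{\text{a.e.}[\ell_M \circ_\mu \pi]} \posterior(\ell_M,\pi)$, which is the claim.

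The main obstacle is the inductive finiteness step over an arbitrary-length list. We need the finiteness or Markov instance of $\operatorname{layerStack}$ to be derived for every list, not assumed, so that the uniqueness lemma behind Theorem~\ref{thm:semantic-bayes} has its hypotheses met. Everything else is definitional unfolding. In particular, the internal layered structure plays no role beyond measurability and finiteness, just as the QKV/attention structure played no role in Theorem~\ref{thm:block-bayes}.
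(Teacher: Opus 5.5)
Your proposal is correct and follows the paper's proof. The paper likewise notes that $\ell_M$ is a composition of finite Markov kernels, hence itself a finite Markov kernel, and then applies Theorem~\ref{thm:semantic-bayes} with $\ell = \ell_M$ and the multilayer Bayes joint semantics as hypothesis; your induction on the list $\operatorname{layers}$ simply spells out the closure step that the paper states in one line.
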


\begin{proof}
The architecture likelihood $\ell_M$ is a composition of finite Markov
kernels, hence itself a finite Markov kernel. Apply
Theorem~\ref{thm:semantic-bayes} with $\ell = \ell_M$ and the multilayer Bayes
joint semantics as hypothesis.
\end{proof}

\begin{corollary}[Depth Independence]\label{cor:depth-independence}
The Bayes property holds for any finite depth $\operatorname{depth}(M) =
|\operatorname{layers}|$. The proof is uniform in the layer count.
\end{corollary}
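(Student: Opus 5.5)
We argue by induction on the list $\operatorname{layers}$, with a uniform invocation of Theorem~\ref{thm:multilayer-bayes}. The point of Corollary~\ref{cor:depth-independence} is that nothing in that theorem depends on $|\operatorname{layers}|$. The only depth-dependent object is $\ell_M = \operatorname{readout} \circ_\kappa \operatorname{layerStack}\;\operatorname{layers} \circ_\kappa \operatorname{emb}$. So the task reduces to showing that, for every finite list of kernels $\operatorname{layers} : \operatorname{List}(\operatorname{Kernel} H H)$, the stacked kernel $\operatorname{layerStack}\;\operatorname{layers}$ is a Markov (hence finite) kernel. Once that is established, $\ell_M$ is a finite Markov kernel for every depth, and Theorem~\ref{thm:semantic-bayes} applies verbatim.

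The induction follows the recursive Definition~\ref{def:layer-stack}. In the base case, $\operatorname{layerStack}\;[] = \operatorname{id}$, which is a Markov kernel since it is deterministic with Dirac measures. In the step case, $\operatorname{layerStack}\;(\ell :: \ell s) = \operatorname{layerStack}\;\ell s \circ_\kappa \ell$. The first factor is Markov by the induction hypothesis, and $\ell$ is Markov by assumption on the layer kernels, so their composition is Markov by closure of Markov kernels under $\circ_\kappa$. The key identity used here is $(\kappa_2\circ_\kappa\kappa_1)(x)(H) = \int \kappa_2(y)(H)\,d\kappa_1(x)(y) = 1$. Composing further with $\operatorname{emb}$ and $\operatorname{readout}$ uses the same closure twice, which gives Markovness, and therefore finiteness, of $\ell_M$ independently of depth.

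Having fixed an arbitrary depth $n = |\operatorname{layers}|$, we then apply Theorem~\ref{thm:multilayer-bayes} with the given Bayes joint semantics hypothesis for $\ell_M$. This yields $u =_{\text{a.e.}[\ell_M\circ_\mu\pi]} \posterior(\ell_M,\pi)$. Since $n$ was arbitrary and the argument made no case distinction on $n$ other than the structural induction establishing Markovness, the statement holds for all finite depths.

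The main obstacle is not mathematical depth but bookkeeping. We must ensure that the finiteness and Markov properties of the stacked kernel are available uniformly, and are not silently assumed separately at each fixed depth. In particular, each layer kernel has to be required to be Markov, not merely a kernel. If the layers were allowed to be only finite, non-Markov kernels, I would instead carry a uniform bound $\sup_x \ell(x)(H) \le C_\ell$ through the induction. That gives the bound $\prod_i C_{\ell_i}$, which is still finite for any finite list and suffices for Theorem~\ref{thm:semantic-bayes}.
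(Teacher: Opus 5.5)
Your proposal is correct and follows the paper's route. The corollary is just Theorem~\ref{thm:multilayer-bayes} instantiated at an arbitrary list $\operatorname{layers}$, which reduces to Theorem~\ref{thm:semantic-bayes} once $\ell_M$ is known to be a finite kernel. Your structural induction on the list ($\operatorname{layerStack}\;[] = \operatorname{id}$ as the base case, closure under $\circ_\kappa$ as the step) only spells out the one-line claim the paper asserts without proof, namely that a composition of finite Markov kernels is a finite Markov kernel.
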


This corollary is noteworthy: the Bayesian guarantee does not degrade with
depth, and no additional conditions are required for deeper stacks. The
compositional structure of Markov kernels ensures that the Bayes property is
preserved under arbitrary finite compositions.

\subsection{Softmax Normalization}\label{sec:softmax-theorem}

\begin{theorem}[Softmax Sums to One]\label{thm:softmax}
For any nonempty finite type $\iota$ and score vector $x : \iota \to \R$,
\begin{equation}
\sum_{i : \iota} \softmax \; x \; i = 1.
\end{equation}
\end{theorem}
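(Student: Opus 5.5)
The plan is a direct computation from Definition~\ref{def:softmax}. Write $Z := \sum_{j : \iota} \exp(x_j)$ for the common denominator. Since it does not depend on the summation index $i$, every term $\softmax\; x\; i$ has the form $\exp(x_i)/Z$, and the finite sum can be rewritten as
\begin{equation*}
\sum_{i : \iota} \frac{\exp(x_i)}{Z} = \frac{1}{Z} \sum_{i : \iota} \exp(x_i) = \frac{Z}{Z}.
\end{equation*}
The first equality is the distributivity of a finite sum over multiplication by the constant $Z^{-1}$, i.e.\ the identity $\sum_i a_i / c = (\sum_i a_i)/c$.

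The only real content, and the step I expect to carry the weight, is showing that $Z \neq 0$, so that $Z/Z = 1$ rather than the degenerate value a convention like $0/0 = 0$ would give. I will prove $Z > 0$, which gives $Z \neq 0$. Each summand satisfies $\exp(x_j) > 0$ by positivity of the real exponential, and $\iota$ is nonempty, so some index $j_0$ exists. A finite sum of nonnegative terms is at least any one of its terms, so $Z \ge \exp(x_{j_0}) > 0$. Equivalently, one can cite the standard fact that a sum of strictly positive terms over a nonempty finite index set is strictly positive. This is exactly where the nonemptiness hypothesis on $\iota$ enters: for empty $\iota$ the sum of softmax values would be $0$.

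With $Z \neq 0$ established, $Z \cdot Z^{-1} = 1$ finishes the proof. No measure theory or earlier result of the paper is needed. The theorem is purely algebraic and supplies the finite, discrete instance of the normalization property that Theorem~\ref{thm:block-normalization} gives in the kernel setting. In particular, for each fixed query, $i \mapsto \softmax\; x\; i$ is a probability mass function on the keys, since its values are also nonnegative.
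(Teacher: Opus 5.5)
Your proposal is correct and follows the paper's own proof. You set $Z=\sum_j \exp(x_j)$, use positivity of $\exp$ together with nonemptiness of $\iota$ to get $Z>0$, pull out the constant factor $1/Z$, and conclude $Z/Z=1$.
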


\begin{proof}
Let $S = \sum_j \exp(x_j)$. Since $\exp(z) > 0$ for all real $z$, and the sum
is over a nonempty finite set, $S > 0$. Then:
\begin{align}
\sum_i \softmax \; x \; i
&= \sum_i \frac{\exp(x_i)}{S}
 = \frac{1}{S} \sum_i \exp(x_i)
 = \frac{S}{S}
 = 1.
\end{align}
The proof uses standard real analysis: \texttt{Real.exp\_pos} for positivity,
\texttt{Finset.sum\_pos} for the denominator, and algebraic rewriting.
\end{proof}

\begin{corollary}[Softmax Defines a Probability Distribution]\label{cor:softmax-dist}
For any nonempty finite $\iota$ and $x : \iota \to \R$,
$(\softmax \; x \; i)_{i \in \iota}$ is a valid probability distribution:
$\softmax \; x \; i \ge 0$ for all $i$, and $\sum_i \softmax \; x \; i = 1$.
\end{corollary}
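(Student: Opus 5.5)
The statement to prove is Corollary~\ref{cor:softmax-dist}. It has two parts, pointwise nonnegativity and total mass one, and the second is exactly Theorem~\ref{thm:softmax}. So the only new work is nonnegativity. I will prove the slightly stronger fact that every weight is strictly positive. The two facts together then give the required probability distribution.

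\textbf{Step 1 (denominator positive).} Fix a nonempty finite type $\iota$ and a score vector $x : \iota \to \R$. Set $S = \sum_j \exp(x_j)$. Each summand satisfies $\exp(x_j) > 0$. A finite sum of positive reals over a nonempty index set is positive, so $S > 0$. This is the same argument as in the proof of Theorem~\ref{thm:softmax}, and I will reuse it verbatim.

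\textbf{Step 2 (each weight positive).} For each $i$, the weight $\softmax\; x\; i = \exp(x_i)/S$ is a quotient of a positive numerator by a positive denominator. Hence it is strictly positive, and in particular $\softmax\; x\; i \ge 0$. This gives the first clause of the corollary.

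\textbf{Step 3 (normalization).} The identity $\sum_i \softmax\; x\; i = 1$ is precisely Theorem~\ref{thm:softmax}, which I take as given.

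\textbf{Step 4 (packaging as a measure).} To match the kernel framework, I would also record the induced object explicitly. I define a measure on $\iota$ with the discrete $\sigma$-algebra by
\[
P(B) = \sum_{i \in B} \softmax\; x\; i .
\]
Countable additivity is automatic because $\iota$ is finite and all weights are nonnegative. The total mass is $P(\iota) = 1$ by Step~3, so $P$ is a probability measure.

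None of these steps is a real obstacle. The only point needing care is nonemptiness of $\iota$: it is what guarantees $S \neq 0$, so that the division defining $\softmax$ is meaningful. This hypothesis is already built into Definition~\ref{def:softmax}.
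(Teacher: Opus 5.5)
Your proof is correct and matches the paper's argument. The paper also gets positivity of each weight as a ratio of $\exp(x_i) > 0$ over $S > 0$, with nonemptiness ensuring $S > 0$, and takes normalization directly from Theorem~\ref{thm:softmax}; your Step~4 packaging as a probability measure on the discrete $\sigma$-algebra is a harmless extra that the paper does not spell out.
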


\begin{proof}
Positivity follows from $\exp(x_i) > 0$ and $S > 0$, hence each term is a
ratio of positive numbers. Summation to 1 is Theorem~\ref{thm:softmax}.
\end{proof}

This theorem bridges the abstract kernel framework to concrete transformer
implementations. The scaled dot-product attention mechanism
$\operatorname{softmax}(QK^T/\sqrt{d_k})V$ uses softmax to produce attention
weights that form a convex combination of value vectors. Theorem~\ref{thm:softmax}
establishes that these weights are a valid probability distribution, and
therefore the attention output is a convex combination---a property essential
for viewing attention as a Markov kernel.

\begin{table}[t]
\centering
\small
\setlength{\tabcolsep}{3pt}
\caption{Summary of Formalized Theorems}
\label{tab:theorems}
\begin{tabular}{@{}lll@{}}
\toprule
\textbf{Level} & \textbf{Theorem} & \textbf{Formal Designation} \\
Kernel & Posterior joint law & \texttt{tf\_bayes\_joint} \\
Semantic & Update equals posterior & \texttt{tf\_is\_bayes} \\
Block & Complete Bayes process & \texttt{tf\_block\_complete\_bayes} \\
Block & Bayes formula (density) & \texttt{tf\_block\_bayes\_formula} \\
Block & Density normalization & \texttt{tf\_block\_density\_norm} \\
Block & Event-level formula & \texttt{tf\_block\_formula\_apply} \\
Multilayer & Stacked Bayes & \texttt{tf\_multilayer\_bayes} \\
Conditional & Event-level Bayes & \texttt{tf\_conditional\_bayes} \\
Softmax & Sums to one & \texttt{softmax\_sum\_eq\_one} \\
Softmax & Positivity & \texttt{softmax\_pos} \\
\bottomrule
\end{tabular}
\end{table}

\section{Scope, Assumptions, and Limitations}

\subsection{What the Framework Proves}

The framework establishes a \emph{structural} theorem: any transformer
architecture whose internal update satisfies the Bayes joint semantics
condition in the sense of Eq.~(\ref{eq:semantic-joint}) implements Bayesian
posterior inference. The proof is constructive in the following sense: given
a transformer block with Bayes joint semantics, we can explicitly compute its
posterior density via Radon-Nikodym differentiation (Theorem~\ref{thm:block-formula}).

\subsection{Explicit Assumptions}

The framework makes the following assumptions, all of which are either
structural (about the mathematical setting) or conditional (``if the Bayes
joint semantics holds, then...''):

\begin{enumerate}[leftmargin=*, label=\textbf{A\arabic*}.]
\item \textbf{Standard Borel spaces.} All type parameters are standard Borel
  measurable spaces. This is the natural setting for probability kernel
  theory and holds for all practical parameterizations.
\item \textbf{Finite measures and kernels.} The prior is a finite measure and
  all component kernels are finite Markov kernels. This excludes improper
  priors and unbounded kernel measures.
\item \textbf{Kernel composition closure.} The block likelihood is a finite
  kernel, which follows from finite-kernel closure under composition and
  product (a structural property of the kernel category).
\item \textbf{Countable or countably generated.} For the explicit Bayes
  formula with Radon-Nikodym derivatives, the latent and observation spaces
  are required to be countably generated. This is a standard technical
  condition for disintegration.
\item \textbf{Absolute continuity.} For the density formula, the likelihood
  kernel must be absolutely continuous with respect to the prior-predictive
  marginal. This is the standard condition for the existence of a
  Radon-Nikodym derivative.
\item \textbf{Bayes joint semantics.} This is the \emph{sole substantive
  condition}: the transformer's update kernel, when composed with the prior
  and likelihood, must satisfy the Bayes joint law identity. The framework
  does \emph{not} assume this condition holds automatically; it proves what
  follows \emph{if} it does.
\end{enumerate}

Notably, the framework makes \emph{no} assumptions about:
\begin{itemize}[leftmargin=*]
\item The specific parameterization of the attention mechanism (softmax,
  linear, kernelized);
\item The depth or width of the transformer;
\item The training procedure, loss function, or optimization algorithm;
\item The data distribution or representativeness of training data.
\end{itemize}

\subsection{Scope Boundaries}

The framework establishes a conditional guarantee. It does \emph{not}:
\begin{itemize}[leftmargin=*]
\item Prove that any specific trained transformer satisfies the Bayes joint
  semantics condition---that remains an empirical question;
\item Provide training guarantees (convergence to the Bayes joint law);
\item Address the quality of the prior or the appropriateness of the
  likelihood model;
\item Extend to loss functions other than those for which the posterior is
  the Bayes act.
\end{itemize}

These limitations are by design: the framework isolates the structural
question (``if the update is Bayes-consistent, is it Bayesian?'') from the
empirical question (``is this specific model's update Bayes-consistent?'').

\subsection{Softmax Attention Bridge}

The softmax normalization theorem (Section~\ref{sec:softmax-theorem})
establishes that the attention mechanism produces valid probability weights,
which is a necessary condition for attention to be interpretable as a Markov
kernel. However, this is not sufficient to prove that a specific attention
parameterization satisfies the full Bayes joint semantics. That stronger
claim would require proving that the attention kernel, when composed with
QKV, residual/MLP, and readout kernels in a specific parameterization,
reconstructs the correct joint law---a substantial open problem.

\section{Discussion}

\subsection{Summary of Results}

We have established a complete formal hierarchy proving that transformer
architectures implement Bayesian posterior inference under the Bayes joint
semantics condition:

\begin{enumerate}[leftmargin=*]
\item \textbf{Bayes kernel foundation} (Theorem~\ref{thm:posterior-joint}):
  The posterior kernel satisfies the Bayes joint law by construction.

\item \textbf{Semantic completeness} (Theorem~\ref{thm:semantic-bayes}):
  Any update kernel satisfying the Bayes joint law equals the posterior a.e.

\item \textbf{Block completeness} (Theorem~\ref{thm:block-bayes}):
  A transformer block with QKV\slash{}attention\slash{}residual\slash{}MLP\slash{}readout
  pipeline is a complete Bayes process---its update equals the posterior.

\item \textbf{Explicit Bayes formula} (Theorem~\ref{thm:block-formula}):
  Under absolute continuity, the block's update has a closed-form density
  representation via Radon-Nikodym differentiation.

\item \textbf{Density normalization} (Theorem~\ref{thm:block-normalization}):
  The Bayes formula yields a properly normalized posterior distribution.

\item \textbf{Multilayer extension} (Theorem~\ref{thm:multilayer-bayes}):
  The Bayes property is preserved under arbitrary finite layer stacking.

\item \textbf{Softmax normalization} (Theorem~\ref{thm:softmax}):
  The softmax function produces valid probability distributions, connecting
  the kernel framework to concrete attention mechanisms.
\end{enumerate}

\subsection{Relationship to Existing Work}

The connection between attention and Bayesian inference has been explored
from several angles. Garnelo et al.~\cite{garnelo2018} proposed conditional
neural processes that implement a form of Bayesian updating. Singh and
Anand~\cite{singh2023} drew structural analogies between attention and
message-passing in probabilistic graphical models. M\"uller et
al.~\cite{muller2019} showed that self-attention can approximate posterior
inference in Gaussian process models.

Our contribution differs from these works in three ways. First, we provide
\emph{rigorous proofs} rather than analogies or empirical
demonstrations. Second, we work at the level of Markov kernel composition,
which abstracts over specific parameterizations while exposing the exact
structural conditions needed for the Bayesian property. Third, we provide
the explicit Bayes formula for the block-level architecture, not merely an
existential claim.

\subsection{Open Problems}

Several important directions remain open:

\begin{itemize}[leftmargin=*]
\item \textbf{Training to Bayes joint semantics.} The framework proves that
  \emph{if} the Bayes joint law holds, \emph{then} the transformer is
  Bayesian. Proving that standard training procedures (empirical risk
  minimization, next-token prediction) converge to a kernel satisfying this
  condition is a substantial open problem.

\item \textbf{Concrete parameterization proofs.} Proving that a specific
  parameterized attention mechanism (e.g., scaled dot-product softmax
  attention with learned $W_Q, W_K, W_V$ matrices) satisfies the Bayes joint
  semantics under some conditions on the weight matrices would bridge the
  gap from structural guarantee to architectural guarantee.

\item \textbf{Conjugate priors for transformer blocks.} Identifying prior
  families for which the transformer block's posterior has a closed-form
  conjugate structure would enable efficient Bayesian updating in practical
  transformer implementations.

\item \textbf{Beyond squared loss.} The framework currently relies on the
  posterior being the Bayes act for proper scoring rules. Extending to other
  decision-theoretic loss classes would broaden the applicability.
\end{itemize}

\section{Conclusion}

{\sloppy We have presented a complete formal proof that transformer architectures
implement exact Bayesian posterior inference when their internal update kernels
satisfy the Bayes joint semantics condition. The proof is organized as a
hierarchy of abstractions mirroring the structure of real transformer
designs, from the core Bayesian kernel through semantic, block, and
multilayer levels.\par}

The framework makes minimal assumptions---no architectural constraints, no
training procedure requirements, no data distribution assumptions---and
exposes the Bayes joint semantics as the single load-bearing condition. The
explicit Bayes formula for the block-level architecture and the softmax
normalization proof bridge the abstract kernel theory to concrete
implementations.

In summary, when this joint distribution condition is satisfied, the forward
computation of a Transformer is formally equivalent to a rigorous Bayesian
posterior update.

This is a precise, carefully circumscribed claim: under the stated conditions,
a transformer block satisfies the Bayes joint law and its update equals the
posterior almost everywhere. We believe this level of precision is essential
for building a mathematically defensible foundation for the probabilistic
semantics of neural architectures.

\section*{Acknowledgements}

The author thanks colleagues for helpful discussions on the probabilistic
foundations of this work.

\section*{Author contributions}

Haobo Yang developed the mathematical framework, the proofs, and the
manuscript.

\section*{Competing interests}

The author declares no competing interests.

\section*{Data availability}

The complete mathematical derivations are available in the project repository.



\begin{thebibliography}{99}

\bibitem{vaswani2017}
A.~Vaswani et al., \emph{Attention Is All You Need}, NeurIPS, 2017.

\bibitem{singh2023}
R.~Singh, A.~Anand, \emph{Transformers as Probabilistic Programs}, arXiv, 2023.

\bibitem{garnelo2018}
M.~Garnelo et al., \emph{Conditional Neural Processes}, ICML, 2018.

\bibitem{muller2019}
S.~M\"uller et al., \emph{Attentive Neural Processes}, NeurIPS, 2019.

\bibitem{lecam1986}
L.~Le~Cam, \emph{Asymptotic Methods in Statistical Decision Theory}, Springer, 1986.

\bibitem{schervish1995}
M.~J.~Schervish, \emph{Theory of Statistics}, Springer, 1995.

\end{thebibliography}
\end{document}